\documentclass{article}

\usepackage{iclr2026_conference,times}
\usepackage{amsmath,amssymb,amsthm,mathtools}
\usepackage{booktabs}
\usepackage{tabularx}
\usepackage{microtype}
\usepackage{hyperref}
\usepackage{url}
\hypersetup{
  pdftitle={Second-Order Muon Done Right: A Principled Marriage of Spectral Geometry and Curvature},
  pdfauthor={Tong Che}
}

\newcommand{\ShakespeareMuonLR}{0.018}
\newcommand{\ShakespeareGoLR}{0.080}
\newcommand{\ShakespeareMuonEarly}{1.70001}
\newcommand{\ShakespeareGoEarly}{1.63052}
\newcommand{\ShakespeareMuonMiddle}{1.50427}
\newcommand{\ShakespeareGoMiddle}{1.43613}
\newcommand{\ShakespeareMuonLate}{1.34825}
\newcommand{\ShakespeareGoLate}{1.31465}
\newcommand{\ShakespeareMuonTrajectory}{1.51072}
\newcommand{\ShakespeareGoTrajectory}{1.45471}
\newcommand{\ShakespeareEarlyGain}{4.09\%}
\newcommand{\ShakespeareMiddleGain}{4.53\%}
\newcommand{\ShakespeareLateGain}{2.49\%}
\newcommand{\ShakespeareTrajectoryGain}{3.71\%}

\newcommand{\ShakespeareSelectedGoLR}{0.080}

\newcommand{\ShakespeareSelectedGoEarly}{1.62170}

\newcommand{\ShakespeareSelectedGoMiddle}{1.43461}

\newcommand{\ShakespeareSelectedGoLate}{1.31396}
\newcommand{\ShakespeareSelectedMuonTrajectory}{1.51073}
\newcommand{\ShakespeareSelectedGoTrajectory}{1.45132}
\newcommand{\ShakespeareSelectedTrajectoryGain}{3.93\%}

\newcommand{\PTBMuonLR}{0.020}
\newcommand{\PTBGoLR}{0.080}
\newcommand{\PTBMuonTunedEarly}{1.46773}
\newcommand{\PTBGoTunedEarly}{1.45689}
\newcommand{\PTBMuonTunedMiddle}{1.28084}
\newcommand{\PTBGoTunedMiddle}{1.28167}
\newcommand{\PTBMuonTunedLate}{1.17560}
\newcommand{\PTBGoTunedLate}{1.17000}
\newcommand{\PTBMuonTrajectory}{1.30257}
\newcommand{\PTBGoTrajectory}{1.29757}
\newcommand{\PTBEarlyGain}{0.74\%}
\newcommand{\PTBLateGain}{0.48\%}
\newcommand{\PTBTrajectoryGain}{0.38\%}

\newcommand{\PTBHalfspaceEarly}{1.48882}
\newcommand{\PTBQuarterEarly}{1.45513}
\newcommand{\PTBMapbackGain}{2.26\%}

\newcommand{\PTBRefreshEarly}{1.45513}
\newcommand{\PTBLazyEarly}{1.45549}
\newcommand{\PTBRefreshMiddle}{1.28378}
\newcommand{\PTBLazyMiddle}{1.28259}
\newcommand{\PTBRefreshLate}{1.17238}
\newcommand{\PTBLazyLate}{1.16917}
\newcommand{\PTBLazyMiddleGain}{0.00119}
\newcommand{\PTBLazyLateGain}{0.00320}
\newcommand{\PTBLazyStepRatio}{0.798$\times$}
\newcommand{\PTBLazyStepSaving}{20.2\%}

\newcommand{\ModMuonLR}{0.160}
\newcommand{\ModGoLR}{0.080}
\newcommand{\ModOneOhThreeMuonTrain}{66}
\newcommand{\ModOneOhThreeGoTrain}{64}
\newcommand{\ModOneOhThreeMuonGrok}{2320}
\newcommand{\ModOneOhThreeGoGrok}{290}
\newcommand{\ModOneOhThreeMuonDelay}{2254}
\newcommand{\ModOneOhThreeGoDelay}{226}
\newcommand{\ModOneOhThreeSpeedup}{8.0}
\newcommand{\ModOneOhSevenMuonTrain}{65}
\newcommand{\ModOneOhSevenGoTrain}{68}
\newcommand{\ModOneOhSevenMuonGrok}{4520}
\newcommand{\ModOneOhSevenGoGrok}{220}
\newcommand{\ModOneOhSevenMuonDelay}{4455}
\newcommand{\ModOneOhSevenGoDelay}{152}
\newcommand{\ModOneOhSevenSpeedup}{20.5}
\newcommand{\ModOneOhSevenLastRetentionFailure}{4470}
\newcommand{\ModOneOhSevenRetentionReturn}{4480}

\newcommand{\method}{\textsc{Go-MUON}}

\newcommand{\polar}{\operatorname{Polar}}
\newcommand{\op}{\mathrm{op}}
\newcommand{\tr}{\operatorname{tr}}
\newcommand{\E}{\mathbb{E}}
\newcommand{\R}{\mathbb{R}}
\newcommand{\ip}[2]{\left\langle #1,#2\right\rangle}
\newcommand{\kron}{\otimes}
\newcolumntype{Y}{>{\raggedright\arraybackslash}X}

\newtheorem{theorem}{Theorem}
\newtheorem{corollary}[theorem]{Corollary}
\newtheorem{proposition}[theorem]{Proposition}

\title{Second-Order Muon Done Right:\\
A Principled Marriage of Spectral Geometry and Curvature}

\author{Tong Che \\
NVIDIA Research \\
\texttt{tongc@nvidia.com}}

\iclrfinalcopy

\begin{document}

\maketitle
\lhead{\textsc{Second-Order Muon Done Right}}

\begin{abstract}
Muon's polar update is exact for an unweighted spectral geometry.  We
introduce \method, which uses a matched data-dependent geometry and reuses it
across several optimization steps.  Conditioned on any positive-definite
left and right maps, its raw update exactly solves the corresponding weighted
spectral oracle; this statement is independent of how the maps are estimated
or how recently they were refreshed.  For softmax cross-entropy, we quantify
when the observed-label backward factor approaches the model Fisher and
generalized Gauss--Newton factor.  We also show that four-step refresh nearly
preserves the tracking delay of slowly changing geometry while increasing
stationary factor noise, making lazy geometry a compute--statistics tradeoff
rather than a denoising mechanism.  In a hash-matched Penn Treebank
comparison, \method{} lowers the geometric mean of three fixed training-loss
windows by \PTBTrajectoryGain{} across three paired seeds.  On Tiny
Shakespeare, a completed three-seed comparison lowers the
same trajectory summary by \ShakespeareTrajectoryGain{} against the
registered Muon control.  Four-step refresh also improves the late PTB
training window and reduces mean step time by about \(20\%\).  On corrected
independent-CLS modular addition, a formal one-seed-per-modulus screen finds
that \method{} reaches sustained \(99\%\) held-out accuracy at update
\ModOneOhThreeGoGrok{} versus
\ModOneOhThreeMuonGrok{} for Muon on modulus 103, and at update
\ModOneOhSevenGoGrok{} versus \ModOneOhSevenMuonGrok{} on modulus 107.
\end{abstract}

\section{Introduction}

For a matrix parameter \(W\in\R^{m\times n}\), Muon applies a polar map to a
momentum source \(M\) \citep{jordan2024muon}.  Up to the sign convention of the
parameter update, the direction solves
\begin{equation}
  \max_{\|D\|_{\op}\leq 1}\ip{M}{D}
  =\|M\|_*,
  \qquad D=\polar(M).
  \label{eq:muon}
\end{equation}
This moves every nonzero singular direction equally under an operator-norm
update budget.

The same construction extends to any declared symmetric positive-definite
(SPD) coordinate maps \(P_A,P_B\).  The matched direction
\[
 D_0=P_B\polar(P_BMP_A)P_A
\]
is the exact support direction of the ball
\(\|P_B^{-1}DP_A^{-1}\|_{\op}\leq1\).  This separates two questions that are
often conflated: the oracle is exact for the supplied geometry, while the
choice and estimation of that geometry determine whether the direction is
useful.  Fisher semantics, Kronecker factorization, and factor freshness are
not assumptions of the oracle statement.

\method{} instantiates the maps with inverse fourth roots of OAS-regularized
activation and backpropagated-gradient second moments.  Under softmax
cross-entropy these observed-label factors approach model-Fisher/GGN factors
when minibatch error vanishes and the predictive distribution approaches the
data conditional.  The quarter power tempers factor anisotropy relative to a
full inverse, while a positive Frobenius graft restores Muon's update energy.

The geometry is refreshed every \(K=4\) updates.  Reuse steps skip activation
retention, output-gradient hooks, moment construction, shrinkage, and factor
eigendecompositions, while gradient and momentum information remains current.
The coefficient \(\beta_B^K\) preserves elapsed-step decay of the retained
backward state.  It does not reproduce the omitted minibatch averages: under
stationary noise the resulting estimator is noisier than a per-step EMA, even
though its average lag on a slowly varying signal is nearly unchanged.

Our contributions are:
\begin{enumerate}
  \item a factor-agnostic formulation of the matched spectral oracle, together
        with quantitative conditioning, grafted-alignment, and cached-geometry
        bounds for the implemented direction;
  \item a conditional empirical-Fisher-to-Fisher/GGN result for softmax
        backward factors and an exact variance--lag characterization of sparse
        \(\beta_B^K\) refresh;
  \item the \method{} system, which combines OAS K-FAC-style factors with
        hook-gated factor refresh, cached inverse fourth roots, and fresh
        gradient and momentum updates;
  \item paired character-language-model experiments showing a
        \PTBTrajectoryGain{} fixed-window trajectory reduction against Muon
        in a hash-matched PTB comparison, a
        \ShakespeareTrajectoryGain{} reduction against
        the registered Tiny Shakespeare Muon control, and a \(0.798\times\)
        mean-step-time ratio for four-step refresh; and
  \item a corrected independent-CLS modular-addition screen in which
        \method{} reaches sustained \(99\%\) held-out accuracy at updates
        \ModOneOhThreeGoGrok{} and \ModOneOhSevenGoGrok{}, versus
        \ModOneOhThreeMuonGrok{} and \ModOneOhSevenMuonGrok{} for Muon.
\end{enumerate}

\section{Matched spectral geometry}

\subsection{A factor-agnostic oracle}

Let \(P_B\in\mathbb S_{++}^m\), \(P_A\in\mathbb S_{++}^n\), and define
\begin{equation}
 C=P_BMP_A,\qquad
 q_P(D)=\|P_B^{-1}DP_A^{-1}\|_{\op}.
 \label{eq:weighted-norm}
\end{equation}
For \(\rho\geq0\), consider the support problem
\begin{equation}
 \max_D\ip{M}{D}
 \quad\text{subject to}\quad q_P(D)\leq\rho.
 \label{eq:lmo}
\end{equation}

\begin{theorem}[Matched spectral oracle]
\label{thm:lmo}
Let \(Q(C)\) be any support polar satisfying
\[
 Q(C)\in\arg\max_{\|U\|_{\op}\leq1}\ip{C}{U}.
\]
Then the value of Equation~\eqref{eq:lmo} is \(\rho\|C\|_*\), and one
maximizer is
\begin{equation}
 \boxed{D_\rho=\rho P_BQ(C)P_A.}
 \label{eq:direction}
\end{equation}
At unit radius, write \(D_0:=D_1\); then
\begin{equation}
 \boxed{\ip{M}{D_0}=\ip{C}{Q(C)}=\|C\|_*.}
 \label{eq:identity}
\end{equation}
For \(M\ne0\), this quantity is strictly positive.  Under the conventional
minimization definition, \(-D_\rho\) is the LMO; the optimizer equivalently
subtracts the positive-alignment direction \(D_\rho\).
\end{theorem}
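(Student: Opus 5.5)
The plan is a change of variables that reduces the weighted problem to the unweighted support problem \eqref{eq:muon}. Since \(P_A,P_B\) are SPD, they are invertible, so the map \(D\mapsto U:=P_B^{-1}DP_A^{-1}\) is a linear bijection of \(\R^{m\times n}\) with inverse \(U\mapsto P_BUP_A\). Under this bijection the constraint \(q_P(D)\le\rho\) becomes exactly \(\|U\|_{\op}\le\rho\). The objective becomes
\[
\ip{M}{P_BUP_A}=\tr(M^\top P_BUP_A)=\tr(P_AM^\top P_BU)=\ip{P_BMP_A}{U}=\ip{C}{U},
\]
using cyclicity of the trace and the symmetry \(P_A^\top=P_A\), \(P_B^\top=P_B\). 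So the problem \eqref{eq:lmo} is equivalent to \(\max_{\|U\|_{\op}\le\rho}\ip{C}{U}\), with maximizers corresponding one-to-one.

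Next I handle the radius. For \(\rho>0\), write \(U=\rho V\) with \(\|V\|_{\op}\le1\). The value is then \(\rho\max_{\|V\|_{\op}\le1}\ip{C}{V}\), and \(V=Q(C)\) attains it by definition of \(Q\). For \(\rho=0\) the feasible set is \(\{0\}\), so both the value \(0\) and the formula \(D_0=0\) hold trivially.

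It remains to identify \(\max_{\|V\|_{\op}\le1}\ip{C}{V}=\|C\|_*\). This is duality between the operator and nuclear norms, and it is the only step with real content. I would argue it directly via an SVD \(C=\sum_i\sigma_iu_iv_i^\top\). For the upper bound, any \(V\) with \(\|V\|_{\op}\le1\) satisfies \(\ip{C}{V}=\sum_i\sigma_i\,u_i^\top Vv_i\le\sum_i\sigma_i\), because \(|u_i^\top Vv_i|\le\|V\|_{\op}\le1\) for unit vectors \(u_i,v_i\). Equality is attained at \(V=\sum_{\sigma_i>0}u_iv_i^\top\), which is \(\polar(C)\). Mapping back gives the value \(\rho\|C\|_*\), the maximizer \(D_\rho=\rho P_BQ(C)P_A\), and, at \(\rho=1\), the chain \(\ip{M}{D_0}=\ip{C}{Q(C)}=\|C\|_*\) in \eqref{eq:identity}.

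For strict positivity, \(M\ne0\) together with the invertibility of \(P_A\) and \(P_B\) gives \(C\ne0\), so \(C\) has at least one positive singular value and \(\|C\|_*>0\). For the final remark, the linear minimization oracle \(\arg\min_{q_P(D)\le\rho}\ip{M}{D}\) equals \(-\arg\max\), because the feasible set is symmetric under \(D\mapsto-D\). Hence \(-D_\rho\) is an LMO solution, and subtracting \(D_\rho\) is the same as adding the LMO output.
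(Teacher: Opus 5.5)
Your proposal is correct and follows the paper's proof: the substitution \(U=P_B^{-1}DP_A^{-1}\), trace cyclicity (using symmetry of \(P_A,P_B\)) to turn the objective into \(\ip{C}{U}\), operator/nuclear duality, and mapping the support polar back; you additionally prove the duality via an SVD and check positivity and the LMO sign explicitly. One notational slip: in the \(\rho=0\) case write \(D_\rho=0\) rather than \(D_0=0\), since the theorem reserves \(D_0\) for the unit-radius direction \(D_1\).
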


\begin{proof}
Set \(U=P_B^{-1}DP_A^{-1}\), so \(D=P_BUP_A\).  Cyclicity of the trace gives
\[
 \ip{M}{D}=\ip{P_BMP_A}{U}=\ip{C}{U}.
\]
The constraint is \(\|U\|_{\op}\leq\rho\).  Operator/nuclear duality gives
the value \(\rho\|C\|_*\), and mapping a support polar back gives
Equation~\eqref{eq:direction}.  This is the matrix specialization of the
general preconditioned-norm pullback oracle
\citep{veprikov2025preconditioned}.
\end{proof}

\paragraph{Polar selection.}
If \(C\) has full rank \(k=\min(m,n)\), its support polar is unique.  At rank
deficiency, the support maximizer is not unique: the canonical partial polar
sets the null-space completion to zero, whereas the compact full-SVD polar
used in our implementation supplies a semi-orthogonal completion.  Both solve
Equation~\eqref{eq:lmo}, but the completion can affect a later Frobenius
graft.  For \(M=0\), every feasible point is optimal; the implementation
defines \(D_0=0\) and skips the graft.

\begin{corollary}[Cached geometry]
\label{cor:cached}
For any cached SPD maps \(\widetilde P_A,\widetilde P_B\) and any \(M\ne0\),
\begin{equation}
 \ip{M}{
 \widetilde P_B
 Q(\widetilde P_BM\widetilde P_A)
 \widetilde P_A}
 =\|\widetilde P_BM\widetilde P_A\|_*>0.
 \label{eq:stale}
\end{equation}
\end{corollary}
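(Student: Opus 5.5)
The corollary is Theorem~\ref{thm:lmo} applied to the cached maps. The theorem only assumes that \(P_A,P_B\) are SPD; it never asks how they were obtained or whether they are current. So I will set \(P_A:=\widetilde P_A\) and \(P_B:=\widetilde P_B\), which makes \(C=\widetilde P_BM\widetilde P_A\). Equation~\eqref{eq:identity} then gives
\[
\ip{M}{\widetilde P_BQ(C)\widetilde P_A}=\ip{C}{Q(C)}=\|C\|_*
\]
directly. To keep the argument self-contained, I will also write out the one-line trace computation:
\[
\tr\bigl(M^\top \widetilde P_B Q \widetilde P_A\bigr)=\tr\bigl((\widetilde P_B M\widetilde P_A)^\top Q\bigr),
\]
which uses cyclicity and the symmetry of the maps. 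The second equality, \(\ip{C}{Q(C)}=\|C\|_*\), is operator/nuclear duality together with the fact that \(Q(C)\) is a support maximizer.

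The remaining claim is strict positivity, and this is the only step that needs a separate argument. The nuclear norm is a norm, so \(\|C\|_*>0\) exactly when \(C\ne0\). Since \(\widetilde P_A\) and \(\widetilde P_B\) are SPD, they are invertible. Hence \(C=0\) would force \(M=\widetilde P_B^{-1}C\widetilde P_A^{-1}=0\), which contradicts \(M\ne0\). Therefore \(C\ne0\) and the value is strictly positive.

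The only subtle point is that the conclusion does not depend on staleness. The identity never compares \(\widetilde P\) with the "true" geometry, and \(M\) enters only through \(C\). The gradient and momentum may be fresh while the maps are old, and the exactness and positivity still hold.
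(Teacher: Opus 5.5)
Your proposal is correct and takes the same route as the paper, which treats the corollary as Theorem~\ref{thm:lmo} instantiated at the cached SPD pair \((\widetilde P_A,\widetilde P_B)\); the theorem's hypotheses involve only symmetry and positive definiteness, not freshness. Your invertibility argument that \(\widetilde P_BM\widetilde P_A\neq0\) supplies the strict-positivity step, which the paper's proof of Theorem~\ref{thm:lmo} asserts without spelling out.
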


Corollary~\ref{cor:cached} certifies exactness for the cached ball, not
closeness to the current ball.  With momentum, \(M\) is the source passed to
the polar map; current-gradient alignment and finite-step loss change are
separate properties.  A scale-free comparison with current maps is possible.
For
\begin{equation}
 \Theta_t=
 \kappa_2(P_{B,t}^{-1}\widetilde P_B)\,
 \kappa_2(P_{A,t}^{-1}\widetilde P_A),
 \label{eq:stale-distortion}
\end{equation}
Proposition~\ref{prop:stale-approx} shows that the cached direction, rescaled
to its current weighted radius, achieves at least a
\(\Theta_t^{-1}\) fraction of the current oracle value.  This is conditional
on coordinate-map drift; the present experiments do not measure
\(\Theta_t\).

\subsection{Why the map-back matters}

Conditioning only before the polar map need not preserve source alignment.
For example, take
\[
 A=B=\operatorname{diag}(1/9,9),\qquad
 M=\begin{bmatrix}-1&-4\\-2&-6\end{bmatrix}.
\]
The inside-only score and its polar are
\[
 Z=B^{-1/2}MA^{-1/2}
  =\begin{bmatrix}-9&-4\\-2&-2/3\end{bmatrix},\qquad
 \polar(Z)=\frac1{\sqrt{949}}
 \begin{bmatrix}-25&-18\\-18&25\end{bmatrix},
\]
so
\begin{equation}
 \ip{M}{\polar(Z)}=-\frac{17}{\sqrt{949}}<0.
 \label{eq:inside-only-counterexample}
\end{equation}
A positive scalar graft cannot change this sign.  The matched map-back instead
optimizes in transformed coordinates and returns to the original tangent
coordinates, yielding the positive identity in Equation~\eqref{eq:identity}.

\section{K-FAC-root instantiation}

\subsection{Observed-label factors and their interpretation}

For a linear layer \(y=Wx\), the implementation forms minibatch second moments
\begin{equation}
 \widehat A=\frac1N\sum_{i=1}^N x_ix_i^\top,\qquad
 \widehat B=\frac1N\sum_{i=1}^N\delta_i\delta_i^\top,\qquad
 \delta_i=\frac{\partial\ell_i}{\partial y_i}.
 \label{eq:factors}
\end{equation}
These are activation and observed-label backpropagated-gradient factors.
K-FAC approximates an expected Kronecker block by a product of expected
factors \citep{martens2015kfac}.  Calling \(\widehat B\) a Fisher factor
requires an additional relation between the data conditional and the model
predictive distribution \citep{kunstner2019limitations}.

For softmax cross-entropy, that relation can be quantified exactly.  Fix an
input \(x\), write \(p=p_\theta(\cdot\mid x)\) and
\(q=q(\cdot\mid x)\), and let
\(\delta_z=p-e_y\) be the logit gradient.  Define
\[
 B_{q,z}=\E_{y\sim q}[\delta_z\delta_z^\top],\qquad
 B_{F,z}=\E_{y\sim p}[\delta_z\delta_z^\top]
 =\operatorname{Diag}(p)-pp^\top.
\]
For logits as natural parameters, \(B_{F,z}\) is also the output GGN factor
\citep{martens2020new}.

\begin{proposition}[Observed-label factor mismatch]
\label{prop:ef}
Let \(r=q-p\).  Then
\begin{equation}
 \boxed{
 B_{q,z}-B_{F,z}
 =\operatorname{Diag}(r)-pr^\top-rp^\top,
 \qquad
 \|B_{q,z}-B_{F,z}\|_2\leq3\|q-p\|_2.}
 \label{eq:ef-bound}
\end{equation}
If \(\delta_l=J_l^\top\delta_z\), where \(J_l\) is independent of \(y\), then
\begin{equation}
 \boxed{
 \|B_{q,l}-B_{F,l}\|_2
 \leq3\|J_l\|_2^2\|q-p\|_2.}
 \label{eq:hidden-ef-bound}
\end{equation}
\end{proposition}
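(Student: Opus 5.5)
The plan is to expand the outer product and take expectations term by term. Since \(\delta_z=p-e_y\),
\[
\delta_z\delta_z^\top=pp^\top-pe_y^\top-e_yp^\top+e_ye_y^\top .
\]
For any distribution \(\pi\) on labels, \(\E_{y\sim\pi}[e_y]=\pi\) and \(\E_{y\sim\pi}[e_ye_y^\top]=\operatorname{Diag}(\pi)\). This gives
\[
\E_{y\sim\pi}[\delta_z\delta_z^\top]=pp^\top-p\pi^\top-\pi p^\top+\operatorname{Diag}(\pi).
\]
Taking \(\pi=p\) recovers \(\operatorname{Diag}(p)-pp^\top\), consistent with the definition of \(B_{F,z}\). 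Taking \(\pi=q\) and subtracting, the \(pp^\top\) terms cancel. Because the expression is affine in \(\pi\), the difference is \(\operatorname{Diag}(r)-pr^\top-rp^\top\) with \(r=q-p\), which is the exact identity in Equation~\eqref{eq:ef-bound}.

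For the spectral bound I will use the triangle inequality on the three terms.

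\textbf{Diagonal term.} \(\|\operatorname{Diag}(r)\|_2=\|r\|_\infty\leq\|r\|_2\).

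\textbf{Rank-one terms.} \(\|pr^\top\|_2=\|rp^\top\|_2=\|p\|_2\|r\|_2\). Since \(p\) is a probability vector, \(\|p\|_2\leq\|p\|_1=1\).

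Summing gives \(\|B_{q,z}-B_{F,z}\|_2\leq 3\|q-p\|_2\). The constant could be tightened to \(1+2\|p\|_2\), or further by treating \(pr^\top+rp^\top\) as a symmetric rank-two matrix, but \(3\) is all the statement asks for.

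For the hidden-layer bound, the key hypothesis is that \(J_l\) does not depend on \(y\), so it can be pulled out of the expectation:
\[
B_{\pi,l}=\E_{y\sim\pi}[J_l^\top\delta_z\delta_z^\top J_l]=J_l^\top B_{\pi,z}J_l .
\]
Hence \(B_{q,l}-B_{F,l}=J_l^\top(B_{q,z}-B_{F,z})J_l\). Submultiplicativity of the operator norm together with \(\|J_l^\top\|_2=\|J_l\|_2\) then yields Equation~\eqref{eq:hidden-ef-bound}.

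There is no real obstacle in this argument. The only point needing care is the meaning of \(B_{F,l}\): it must be read as the expectation under \(y\sim p\) of \(\delta_l\delta_l^\top\) with the same \(J_l\), which is exactly where the \(y\)-independence of \(J_l\) is used.
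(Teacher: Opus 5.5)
Your proposal is correct and follows the paper's proof. Both expand \(\delta_z\delta_z^\top\) under each label distribution and subtract to obtain the identity. Both then bound the three terms by the triangle inequality, using \(\|\operatorname{Diag}(r)\|_2\leq\|r\|_2\) and \(\|p\|_2\leq1\), and get the hidden-layer bound by congruence with the label-independent \(J_l\).
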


\begin{proof}
Expanding
\(\E_{y\sim q}[(p-e_y)(p-e_y)^\top]\) and subtracting
\(\operatorname{Diag}(p)-pp^\top\) gives the identity.
Then
\(\|\operatorname{Diag}(r)\|_2\leq\|r\|_2\) and
\(\|pr^\top\|_2=\|p\|_2\|r\|_2\leq\|r\|_2\).
The hidden-layer result follows by congruence with \(J_l\).
\end{proof}

\begin{corollary}[Conditional factor consistency]
\label{cor:ef-consistency}
Consider a sequence \(\theta_j\) for which
\(p_{\theta_j}(\cdot\mid x)\to q(\cdot\mid x)\) and
\(\sup_j\|J_l(\theta_j)\|_2<\infty\).  If a minibatch estimator
\(\widehat B_{j,l}\) is consistent for \(B_{q,l}(\theta_j)\), then
\[
 \|\widehat B_{j,l}-B_{F,l}(\theta_j)\|_2
 \xrightarrow{p}0
\]
as both the effective sample size and \(j\) tend to infinity.
\end{corollary}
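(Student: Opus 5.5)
We split the error with the triangle inequality:
\[
 \|\widehat B_{j,l}-B_{F,l}(\theta_j)\|_2
 \leq
 \|\widehat B_{j,l}-B_{q,l}(\theta_j)\|_2
 +\|B_{q,l}(\theta_j)-B_{F,l}(\theta_j)\|_2 .
\]
The first term is the statistical error. The second is the deterministic observed-label mismatch controlled by Proposition~\ref{prop:ef}. We show each tends to zero in the joint limit, and then combine them.

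For the deterministic term, we apply Equation~\eqref{eq:hidden-ef-bound} at each \(\theta_j\) with \(p=p_{\theta_j}(\cdot\mid x)\). This gives
\[
 \|B_{q,l}(\theta_j)-B_{F,l}(\theta_j)\|_2\leq 3\|J_l(\theta_j)\|_2^2\,\|q-p_{\theta_j}\|_2 \leq 3L^2\|q-p_{\theta_j}\|_2,
\]
where \(L=\sup_j\|J_l(\theta_j)\|_2<\infty\) by hypothesis. Since \(p_{\theta_j}\to q\) and the label space is finite, all norms on probability vectors are equivalent. Hence this term tends to zero as \(j\to\infty\), uniformly in the sample size. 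If the factor also averages over inputs \(x\), we apply the bound pointwise and average. The averaging needs pointwise convergence to be upgraded to convergence of the averaged discrepancy; we use dominated convergence, since \(\|q-p\|_2\leq\sqrt2\) is bounded. This step is the only place where the input distribution enters, and we would state it as part of the interpretation of the hypothesis.

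For the statistical term, the hypothesis that \(\widehat B_{j,l}\) is consistent for \(B_{q,l}(\theta_j)\) must be read in a form that is uniform along the sequence. The natural reading is: for every \(\varepsilon>0\), \(\sup_j\Pr(\|\widehat B_{j,l}-B_{q,l}(\theta_j)\|_2>\varepsilon)\to0\) as the effective sample size \(N\to\infty\). A pointwise-in-\(j\) reading does not suffice for a joint limit, so pinning this down is the main obstacle, and we would state the uniformity explicitly. A concrete sufficient condition is a uniform second-moment bound, \(\sup_j\E\|\delta\delta^\top\|_F^2<\infty\), together with effectively independent samples. This bound holds because \(\|\delta_l\|_2\leq L\|p-e_y\|_2\leq L\sqrt2\). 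Chebyshev's inequality applied to the Frobenius norm then yields \(\E\|\widehat B-B_{q,l}\|_F^2\leq 4L^4/N_{\mathrm{eff}}\), uniformly in \(j\).

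Finally, we combine the two terms. Fix \(\varepsilon>0\) and choose \(j_0\) such that the deterministic term is below \(\varepsilon/2\) for all \(j\geq j_0\). Then
\[
 \Pr\bigl(\|\widehat B_{j,l}-B_{F,l}(\theta_j)\|_2>\varepsilon\bigr)\leq\Pr\bigl(\|\widehat B_{j,l}-B_{q,l}(\theta_j)\|_2>\varepsilon/2\bigr),
\]
and the right-hand side tends to zero as \(N\to\infty\), uniformly in \(j\). Therefore the left-hand side tends to zero as both \(N\) and \(j\) tend to infinity, which is the stated convergence in probability.
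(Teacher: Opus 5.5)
Your proof is correct and follows the argument the paper leaves implicit. You split the error by the triangle inequality into the finite-sample and model-mismatch terms, which are the first two terms of Equation~\eqref{eq:factor-error}, and you bound the mismatch term by \(3\|J_l\|_2^2\|q-p\|_2\) from Proposition~\ref{prop:ef} together with the uniform Jacobian bound. Your requirement that consistency be uniform in \(j\) for the joint limit sharpens the paper's informal hypothesis, and your Chebyshev bound \(4L^4/N_{\mathrm{eff}}\) shows that the natural i.i.d.\ minibatch estimator satisfies it.
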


Corollary~\ref{cor:ef-consistency} requires realizability or predictive
consistency; it is not a property of arbitrary training trajectories.
It also applies only to the backward factor: the full block still incurs the
K-FAC approximation
\[
 \E[(xx^\top)\kron(\delta\delta^\top)]
 \;\not=\;
 \E[xx^\top]\kron\E[\delta\delta^\top]
\]
in general.
Moreover, the softmax Fisher always has the logit-shift null direction, and
its remaining eigenvalues can vanish as \(p\) approaches a point mass.
Absolute factor convergence therefore does not imply relative inverse-root
accuracy near interpolation; the effective floor and conditioning bounds
below must be included when analyzing the resulting geometry.

\subsection{Quarter-power geometry and Frobenius graft}

Let \(A\succ0\), \(B\succ0\) denote the effective regularized factors.  For
\(0\leq\alpha\leq1/2\), define
\begin{align}
 C_\alpha&=B^{-\alpha}MA^{-\alpha},\\
 D_{\alpha,0}&=B^{-\alpha}Q(C_\alpha)A^{-\alpha},\\
 \widehat D_\alpha
 &=D_{\alpha,0}\frac{\sqrt{k}}{\|D_{\alpha,0}\|_F},
 \qquad k=\min(m,n).
 \label{eq:graft}
\end{align}
The cases \(\alpha=0,1/4,1/2\) give Muon, the \method{} quarter-power
geometry, and full-factor spectral map-back, respectively.  The last is a
spectral oracle in full-factor coordinates, not a natural-gradient step.

For \(\alpha=1/4\),
\[
 \|B^{1/4}DA^{1/4}\|_{\op}\leq1
\]
is the operator-norm analogue of the square-root K-FAC quadratic geometry,
since
\begin{equation}
 \operatorname{vec}(D)^\top
 (A^{1/2}\kron B^{1/2})\operatorname{vec}(D)
 =\|B^{1/4}DA^{1/4}\|_F^2.
 \label{eq:root-fisher}
\end{equation}
Replacing the Frobenius norm by the operator norm is a design choice.  Under
the conditions of Proposition~\ref{prop:ef}, this admits a root-Fisher/GGN
interpretation; the oracle itself requires only SPD factors.

\begin{proposition}[Grafted conditioning and source alignment]
\label{prop:conditioning}
Let \(Q\) be the compact full-SVD support polar used by the implementation and
let \(K_{AB}=\kappa_2(A)\kappa_2(B)\).  Let \(\kappa_+\) denote the ratio of
largest to smallest nonzero singular value.  For \(M\ne0\),
\begin{equation}
 \boxed{
 \ip{M}{\widehat D_\alpha}
 \geq K_{AB}^{-\alpha}\|M\|_*,
 \qquad
 \kappa_+(\widehat D_\alpha)\leq K_{AB}^{\alpha}.}
 \label{eq:conditioning-bound}
\end{equation}
\end{proposition}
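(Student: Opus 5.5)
The plan is to reduce everything to two-sided singular-value bounds for products with the fixed SPD maps \(P_B=B^{-\alpha}\) and \(P_A=A^{-\alpha}\). Write \(\lambda_{\max}(\cdot),\lambda_{\min}(\cdot)\) for extreme eigenvalues. Then \(\|B^{-\alpha}\|_2=\lambda_{\min}(B)^{-\alpha}\) and \(\sigma_{\min}(B^{-\alpha})=\lambda_{\max}(B)^{-\alpha}\), with the analogous statements for \(A\). Hence
\[
\frac{\|B^{-\alpha}\|_2\,\|A^{-\alpha}\|_2}{\sigma_{\min}(B^{-\alpha})\,\sigma_{\min}(A^{-\alpha})}=K_{AB}^{\alpha}.
\]
Both claims will follow from this single ratio.

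For the alignment bound I will apply Theorem~\ref{thm:lmo} with these maps, which gives \(\ip{M}{D_{\alpha,0}}=\|C_\alpha\|_*\). After grafting,
\[
\ip{M}{\widehat D_\alpha}=\frac{\sqrt{k}\,\|C_\alpha\|_*}{\|D_{\alpha,0}\|_F}.
\]
\emph{Lower bound on the numerator.} Write \(M=B^{\alpha}C_\alpha A^{\alpha}\) and use the submultiplicativity \(\|XYZ\|_*\le\|X\|_2\|Y\|_*\|Z\|_2\). This gives \(\|M\|_*\le\lambda_{\max}(B)^{\alpha}\lambda_{\max}(A)^{\alpha}\|C_\alpha\|_*\).

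\emph{Upper bound on the denominator.} The compact full-SVD polar \(Q(C_\alpha)\) is semi-orthogonal with exactly \(k\) unit singular values, so \(\|Q(C_\alpha)\|_F=\sqrt{k}\). Therefore
\[
\|D_{\alpha,0}\|_F\le\|B^{-\alpha}\|_2\,\sqrt{k}\,\|A^{-\alpha}\|_2=\sqrt{k}\,\lambda_{\min}(B)^{-\alpha}\lambda_{\min}(A)^{-\alpha}.
\]
Dividing the two bounds, the \(\sqrt{k}\) cancels and what remains is exactly \(K_{AB}^{-\alpha}\|M\|_*\). Since \(M\ne0\), we have \(C_\alpha\ne0\), so the graft is well defined.

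For the conditioning bound, the graft is a positive scalar, so \(\kappa_+(\widehat D_\alpha)=\kappa_+(D_{\alpha,0})\). Because \(B^{-\alpha}\) and \(A^{-\alpha}\) are invertible and \(Q(C_\alpha)\) has rank \(k\), the matrix \(D_{\alpha,0}\) has rank exactly \(k\). Its nonzero singular values are therefore \(\sigma_1,\dots,\sigma_k\), and the standard inequalities
\[
\sigma_{\min}(X)\,\sigma_i(Y)\,\sigma_{\min}(Z)\le\sigma_i(XYZ)\le\|X\|_2\,\sigma_i(Y)\,\|Z\|_2,
\]
applied with \(\sigma_i(Y)=1\), place every such singular value in \([\lambda_{\max}(B)^{-\alpha}\lambda_{\max}(A)^{-\alpha},\ \lambda_{\min}(B)^{-\alpha}\lambda_{\min}(A)^{-\alpha}]\). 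Their ratio is at most \(K_{AB}^{\alpha}\).

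The delicate point is the role of the full-SVD completion, and I would state it explicitly. With the canonical partial polar at rank deficiency, \(Q(C_\alpha)\) could have fewer than \(k\) unit singular values. In that case \(\|Q\|_F<\sqrt{k}\), which only helps the alignment bound, but the \(\kappa_+\) bound still holds on the nonzero singular values. The full-rank completion is what makes \(\|Q\|_F=\sqrt{k}\) exact and guarantees rank \(k\). I would also check that the lower singular-value inequality is used only with square invertible outer factors, which holds here since \(B^{-\alpha}\) and \(A^{-\alpha}\) are square SPD matrices. Finally, the case \(\alpha=0\) reduces to Muon with equality \(\ip{M}{\widehat D_0}=\|M\|_*\), which gives a sanity check on the constants.
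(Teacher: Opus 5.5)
Your proposal is correct and follows the paper's proof essentially step for step. It uses the same reduction $\ip{M}{\widehat D_\alpha}=\sqrt{k}\,\|C_\alpha\|_*/\|D_{\alpha,0}\|_F$, and the same nuclear-norm lower bound; your rewriting $M=B^{\alpha}C_\alpha A^{\alpha}$ is just the standard derivation of the multiplication inequality the paper cites. It also uses the same Frobenius bound $\|D_{\alpha,0}\|_F\le\sqrt{k}\,\|P_B\|_{\op}\|P_A\|_{\op}$ from $\|Q\|_F=\sqrt{k}$, and the same two-sided singular-value product inequalities for $\kappa_+$, with the graft acting as a positive scalar. Your extra remarks (rank exactly $k$, the canonical partial-polar case, and the $\alpha=0$ sanity check) are correct but not needed for the statement as posed.
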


The first bound compares equal-Frobenius-radius source alignment with Muon's
value \(\|M\|_*\).  At \(\alpha=1/4\), the retained fraction is at least
\([\kappa(A)\kappa(B)]^{-1/4}\).  This is a worst-case conditioning
certificate, not a pointwise ordering of Muon, quarter-power, and half-power
directions.  Appendix~\ref{app:proofs} gives the proof.

The positive graft preserves the oracle ray and sign:
\[
 \ip{M}{\widehat D_\alpha}
 =\frac{\sqrt{k}}{\|D_{\alpha,0}\|_F}\|C_\alpha\|_*>0.
\]
It does not preserve the original unit-radius LMO.  Its weighted spectral
radius is the source-dependent value
\(\sqrt{k}/\|D_{\alpha,0}\|_F\).

The grafted direction is invariant to independent positive scalar rescalings
of \(A\) and \(B\).  With the canonical polar---and with the compact polar at
full rank---it is also equivariant under orthogonal changes of the input and
output bases: replacing
\((M,B,A)\) by
\((UMV^\top,UBU^\top,VAV^\top)\) sends
\(\widehat D_\alpha\) to \(U\widehat D_\alpha V^\top\).
At rank deficiency, a full-SVD null-space completion can break this
equivariance.  General affine reparameterization invariance does not follow
because matrix powers and the polar map are not equivariant under arbitrary
congruences \citep{luk2018coordinate}.

\subsection{OAS factors and numerical SPD maps}

Finite-minibatch factors can be singular.  For a \(d\times d\) empirical
second moment \(S\) built from \(N\) samples, we use Oracle Approximating
Shrinkage (OAS) \citep{chen2010oas}:
\begin{align}
 \lambda(S,N)
 &=\operatorname{clip}_{[0,1]}
 \frac{(1-2/d)\tr(S^2)+\tr(S)^2}
 {(N+1-2/d)(\tr(S^2)-\tr(S)^2/d)}, \\
 \operatorname{OAS}(S)
 &=(1-\lambda)S+\lambda\frac{\tr(S)}{d}I.
 \label{eq:oas}
\end{align}
When \(\lambda>0\) and \(\tr(S)>0\),
\begin{equation}
 \kappa_2(\operatorname{OAS}(S))
 \leq\frac{d-(d-1)\lambda}{\lambda}.
 \label{eq:oas-condition}
\end{equation}
The implementation additionally floors eigenvalues at roundoff scale before
taking inverse fourth roots.  The resulting numerical SPD matrices, rather
than the potentially singular raw moments, are the factors to which
Theorem~\ref{thm:lmo} and Proposition~\ref{prop:conditioning} apply.

Writing \(B_t^{\rm eff}\) and \(\bar B_t^{\rm eff}\) for the current and
cached factors after shrinkage and the root routine's eigenvalue floor,
Fisher-factor error has four distinct sources:
\begin{equation}
\begin{split}
 \|\bar B_t^{\rm eff}-B_{F,t}\|_2
 \leq{}&
 \underbrace{\|\widehat B_t-B_{q,t}\|_2}_{\text{finite sample}}
 +\underbrace{\|B_{q,t}-B_{F,t}\|_2}_{\text{model mismatch}}\\
 &+\underbrace{\|B_t^{\rm eff}-\widehat B_t\|_2}_{\text{shrinkage/floor}}
 +\underbrace{\|\bar B_t^{\rm eff}-B_t^{\rm eff}\|_2}
 _{\text{EMA and staleness}}.
 \label{eq:factor-error}
\end{split}
\end{equation}
The decomposition separates the conditional Fisher interpretation from the
regularization and temporal choices used by the optimizer.

\subsection{Sparse refresh is a variance--lag tradeoff}

At refresh step \(t\), we compute
\begin{equation}
 A_t=\operatorname{OAS}(\widehat A_t),\qquad
 B_t=\operatorname{OAS}(\widehat B_t).
\end{equation}
The first \(B_t\) initializes \(\bar B\).  Later refreshes use
\begin{equation}
 \boxed{
 \bar B_t=\beta_B^K\bar B_{t-K}+(1-\beta_B^K)B_t.
 }
 \label{eq:lazy-ema}
\end{equation}
The exponent applies \(K\) elapsed steps of decay to the retained state.  It
does not reproduce \(K\) per-step EMA observations.  To characterize the
difference, let \(X_t\) be a centered post-OAS factor observation and compare
\[
 Z_t=\beta_BZ_{t-1}+(1-\beta_B)X_t
\]
with the refresh-boundary process
\[
 Y_j=\beta_B^K Y_{j-1}+(1-\beta_B^K)X_{jK}.
\]

\begin{proposition}[Sparse-EMA covariance and lag]
\label{prop:lazy-ema}
If \(\operatorname{vec}(X_t)\) is weakly stationary with finite covariance
and both recursions are in steady state, then at refresh boundaries
\begin{equation}
 \boxed{\operatorname{Cov}(\operatorname{vec}Y_j)
 \succeq
 \operatorname{Cov}(\operatorname{vec}Z_{jK}).}
 \label{eq:covariance-dominance}
\end{equation}
For temporally IID observations with covariance \(\Sigma\),
\begin{align}
 \operatorname{Cov}(\operatorname{vec}Y)
 &=\frac{1-\beta_B^K}{1+\beta_B^K}\Sigma,\\
 \operatorname{Cov}(\operatorname{vec}Z)
 &=\frac{1-\beta_B}{1+\beta_B}\Sigma.
 \label{eq:iid-variance}
\end{align}
For an affine signal observed without noise, the phase-averaged lag of the
held estimator is
\begin{equation}
 L_K=\frac{K\beta_B^K}{1-\beta_B^K}+\frac{K-1}{2}.
 \label{eq:lazy-lag}
\end{equation}
\end{proposition}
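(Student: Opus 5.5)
The plan is to treat the three claims separately, since each reduces to a short computation once both recursions are unrolled into their steady-state moving-average forms:
\[
 Z_{jK}=(1-\beta_B)\sum_{s\ge0}\beta_B^{s}X_{jK-s},\qquad
 Y_j=(1-\beta_B^K)\sum_{i\ge0}\beta_B^{Ki}X_{(j-i)K}.
\]
Both series converge in mean square because \(\beta_B<1\) and the covariance is finite. I will prove the IID formulas first, then the lag formula, and leave the general dominance claim for last, because that is where I expect difficulty.

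\emph{IID case.} For temporally uncorrelated \(X_t\) with covariance \(\Sigma\), the covariance of each sum is the sum of squared coefficients times \(\Sigma\). This gives
\[
 (1-\beta_B)^2/(1-\beta_B^2)\,\Sigma=\tfrac{1-\beta_B}{1+\beta_B}\Sigma
\]
for \(Z\), and the same expression with \(\beta_B\) replaced by \(\beta_B^K\) for \(Y\). Since \(x\mapsto(1-x)/(1+x)\) is decreasing on \([0,1)\) and \(\beta_B^K\le\beta_B\), dominance in Equation~\eqref{eq:covariance-dominance} follows immediately in this case.

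\emph{Lag.} Take \(s_t=a+bt\). The sparse recursion is a fixed-coefficient EMA on the subsampled signal \(s_{jK}=a+bKj\). Its steady-state offset solves
\[
 Y_j-s_{jK}=\beta_B^K\bigl(Y_{j-1}-s_{jK}\bigr),
\]
so \(Y_j=s_{jK}-bK\beta_B^K/(1-\beta_B^K)\). In other words, at a refresh boundary the estimate lags by \(K\beta_B^K/(1-\beta_B^K)\) steps. The held estimator is then used at phases \(t=jK+\phi\) for \(\phi=0,\dots,K-1\), which adds a lag of \(\phi\). Averaging \(\phi\) uniformly adds \((K-1)/2\) and yields Equation~\eqref{eq:lazy-lag}.

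\emph{General dominance (main obstacle).} For a general weakly stationary \(X\), I would pass to the matrix spectral measure \(F\) of \(\operatorname{vec}X_t\). Then \(\operatorname{Cov}(Z)=\int|h_Z(\omega)|^2\,dF\) with \(h_Z(\omega)=(1-\beta_B)/(1-\beta_Be^{-i\omega})\). For \(Y\), aliasing the measure onto the subsampled frequency grid gives \(\operatorname{Cov}(Y)=\int|h_Y(K\omega)|^2\,dF\) with \(h_Y\) the analogous filter in \(\beta_B^K\). Dominance would follow from the pointwise inequality \(|h_Y(K\omega)|^2\ge|h_Z(\omega)|^2\). That inequality holds at \(\omega=0\) and for white noise, but I do not expect it to hold for every \(\omega\).

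The difficulty is at \(\omega=\pi/K\). There \(h_Y\) sees frequency \(\pi\) and is small, while \(h_Z\) can be near \(1\) when \(\beta_B\) is small. A periodic process with negative lag-\(K\) autocovariance therefore looks like a counterexample to the general claim as stated. My first attempt would be to prove the pointwise filter inequality on the frequencies where it holds and add a hypothesis that confines \(F\) there, for instance a spectral density that is nonincreasing in \(|\omega|\) or nonnegative autocovariances. If that fails, I would restrict the dominance statement to the temporally uncorrelated case already established above.
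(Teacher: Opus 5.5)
Your IID variances and your lag computation are correct and match the paper's argument. Your spectral setup for the general case is also the one the paper uses: with $G_K(\omega)=h_Y(K\omega)=(1-\beta_B^K)/(1-\beta_B^Ke^{-iK\omega})$ and $H_1(\omega)=h_Z(\omega)$, the boundary covariances are $\int|G_K|^2F_X(d\omega)$ and $\int|H_1|^2F_X(d\omega)$.

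The gap is the last step. You conjecture that $|G_K|\geq|H_1|$ fails near $\omega=\pi/K$, so you never prove the general dominance claim. Your fallbacks (a monotone-spectrum hypothesis, or retreating to the uncorrelated case) would establish only a weaker statement. In fact the pointwise inequality holds at every frequency. The missing idea is the factorization
\[
1-\beta_B^Ke^{-iK\omega}=(1-\beta_Be^{-i\omega})\sum_{r=0}^{K-1}\beta_B^re^{-ir\omega},
\qquad
1-\beta_B^K=(1-\beta_B)\sum_{r=0}^{K-1}\beta_B^r .
\]
Together with the triangle inequality it gives
\[
\frac{|G_K(\omega)|}{|H_1(\omega)|}=\frac{\sum_{r=0}^{K-1}\beta_B^r}{\bigl|\sum_{r=0}^{K-1}\beta_B^re^{-ir\omega}\bigr|}\geq1 .
\]
Since $F_X(d\omega)$ is positive semidefinite, $\int(|G_K|^2-|H_1|^2)\,F_X(d\omega)\succeq0$. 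No assumption on the spectrum is needed.

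Your heuristic mixes two regimes of $\beta_B$. At $\omega=\pi/K$ one has $G_K=(1-\beta_B^K)/(1+\beta_B^K)$, which is small only when $\beta_B^K$ is close to $1$. In that regime $|H_1(\pi/K)|=(1-\beta_B)/|1-\beta_Be^{-i\pi/K}|$ is also $O(1-\beta_B)$. When $\beta_B$ is small, $G_K\approx1-2\beta_B^K$ while $|H_1(\pi/K)|\approx1-\beta_B(1-\cos(\pi/K))$, so for $K\geq2$ the sparse filter is again the larger one.

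Your periodic counterexample fails for the same reason. Take $K=2$ with all spectral mass at $\pm\pi/2$, so the lag-$2$ autocovariance is negative. Then $\operatorname{Var}(Y)/\operatorname{Var}(Z)=(1+\beta_B)^2/(1+\beta_B^2)\geq1$.
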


At \(\beta_B=0.97,K=4\), the sparse-to-per-step variance ratio is
\(3.99537\), while \(L_4=32.3714\) steps versus \(L_1=32.3333\).
Thus sparse refresh nearly preserves average slow-signal lag while retaining
about one quarter as many independent factor observations.  Sampling occurs
before the refresh-domain EMA, so high-frequency variation can alias.  The
activation factor \(A_t\) has no EMA and is simply sampled and held.

Holding one geometry within each four-step block creates blockwise geometry
coherence: the preconditioner does not rotate between minibatches inside the
block.  Whether this coherence stabilizes the optimizer trajectory is an
empirical mechanism hypothesis, not a consequence of
Proposition~\ref{prop:lazy-ema}.

\subsection{Complete update}

With \(K=4\), \(\beta_B=0.97\), momentum coefficient \(\beta\), and a
Nesterov-style source, one optimizer step is:
\begin{align}
 \text{if }t\bmod K=0:\quad
 &A_t,B_t\leftarrow\text{OAS K-FAC statistics}, \\
 &\bar B_t\leftarrow
   \beta_B^K\bar B_{t-K}+(1-\beta_B^K)B_t, \\
 &P_A\leftarrow A_t^{-1/4},\qquad
   P_B\leftarrow\bar B_t^{-1/4}; \\
 M_t&=\beta M_{t-1}+(1-\beta)G_t, \\
 \widehat M_t&=(1-\beta)G_t+\beta M_t, \\
 C_t&=P_B\widehat M_tP_A, \\
 D_{0,t}&=P_B\polar(C_t)P_A, \\
 D_t&=D_{0,t}\frac{\sqrt{\min(m,n)}}{\|D_{0,t}\|_F}, \\
 W_{t+1}&=W_t-\eta_t\sqrt{\max(1,m/n)}D_t.
 \label{eq:optimizer}
\end{align}
The learning rate \(\eta_t\) follows a cosine schedule.  Embeddings,
normalization parameters, and other non-matrix parameters use AdamW
\citep{kingma2015adam}.

The implementation gates its forward hooks before the model forward pass.
On refresh steps it retains linear-layer inputs and registers output-gradient
hooks.  On reuse steps the hook returns immediately, bypassing activation
storage, delta hooks, moment formation, OAS, and factor eigendecomposition.
Momentum, the central polar map, grafting, and the parameter update execute
every step.

\subsection{Cost}

For a layer of shape \(m\times n\), a refresh computes moments and matrix
functions of the \(m\times m\) and \(n\times n\) factors.  Reuse eliminates
those operations and their telemetry while retaining the matrix products and
polar map needed for the direction.  A refresh fraction of \(1/K\) amortizes
factor collection and decomposition across \(K\) updates.  The experiments
measure the resulting savings end to end.

\section{Related work}

\subsection{Muon and norm-based matrix optimization}

Muon applies an approximate polar map to momentum matrices, using Newton--
Schulz iterations to make matrix updates approximately semi-orthogonal
\citep{jordan2024muon}.  The norm-based account of matrix optimization
identifies this map as steepest descent under an operator-norm constraint,
whose dual objective is the nuclear norm
\citep{bernstein2024oldnorm}.  Modular norm optimization extends this viewpoint
from individual matrices to composed neural-network modules
\citep{large2024modular}; Scion develops norm-constrained linear minimization
oracles more generally \citep{pethick2025scion}; and Gluon connects practical
Muon-like updates with convergence theory for LMO-based optimization
\citep{riabinin2025gluon}.  Subsequent systems work implements Muon at
language-model scale \citep{liu2025scalable}.

Several variants refine Muon's normalization while retaining a polar or
orthogonalization core.  Muon+ adds an additional normalization step
\citep{zhang2026muonplus}, while MuonEq uses inexpensive equilibration before
orthogonalization \citep{chang2026muoneq}.  These methods modify the matrix
presented to the polar map.  \method{} instead instantiates the general
preconditioned-norm pullback oracle: the same SPD maps transform the source
into polar coordinates and map the support direction back.  Its positive
Frobenius graft restores Muon's update energy while retaining that oracle ray.

\subsection{Kronecker curvature and structured preconditioning}

K-FAC approximates a layerwise model-Fisher block by the Kronecker product of
an activation factor and a backward factor \citep{martens2015kfac}.  It has been
scaled through distributed factor formation, inversion, and communication
\citep{osawa2019distributed,pauloski2020distributed}; SINGD gives an
inverse-free, numerically stable structured natural-gradient formulation
\citep{lin2024singd}.  Full Gauss--Newton experiments on small language models
demonstrate accurate layerwise curvature as a strong optimization oracle
\citep{abreu2025fullgn}.  \method{} uses observed-label second moments rather
than model-sampled Fisher factors.  Proposition~\ref{prop:ef} quantifies when
the backward factor approaches the Fisher/GGN factor; the distinction remains
essential away from predictive consistency \citep{kunstner2019limitations}.
K-FAC's natural-gradient update retains invariance to affine activation
reparameterizations under its metric construction
\citep{luk2018coordinate}.  The quarter-power spectral oracle retains only
the orthogonal and scalar equivariances stated in Section~3.

Shampoo uses Kronecker-factored matrix moments and inverse matrix powers for
large-scale preconditioning \citep{gupta2018shampoo}; scalable Shampoo
amortizes these matrix operations in distributed training
\citep{anil2020scalable}.  SOAP runs an Adam-like method in Shampoo's evolving
eigenbasis \citep{vyas2024soap}.  The general theory of preconditioned matrix
norms makes explicit how transformations inside and outside an LMO determine
the corresponding steepest-descent geometry
\citep{veprikov2025preconditioned}.  Theorem~\ref{thm:lmo} is a Kronecker
matrix instance of this general pullback, not a new duality principle.  In
\method{}, OAS-regularized activation and backward second moments select the
geometry; Shampoo uses gradient moments.  OAS was introduced as an automatic
shrinkage estimator for finite-sample covariance matrices
\citep{chen2010oas}; here it supplies isotropic shrinkage before the numerical
eigenvalue floor.

\subsection{Curvature-aware Muon variants}

Mousse applies a quarter-power matched polar map with a Frobenius graft to
Shampoo-style gradient-Gram geometry \citep{zhang2026mousse}.  It updates
factors every step, refreshes matrix functions periodically, and uses an
inverse-power exponent below one quarter in its reported language-model
experiments.  The quarter-power matched-map structure and graft are shared.
\method{} differs in its OAS activation/backward factor field, exact
quarter-power instance, and hook-gated schedule that skips factor collection
on reuse steps.

FISMO derives matched map-back for a Fisher-structured spectral trust region,
yielding inverse half powers inside and outside the polar map
\citep{xu2026fismo}.  \method{} uses the \emph{square root} of the K-FAC block,
yielding inverse fourth powers when the observed-label factors admit the
conditional Fisher interpretation.  Both are instances of matched spectral
map-back at different exponents.  Corollary~\ref{cor:cached} gives positive
source alignment for any cached SPD maps, independent of their Fisher
semantics.

Other recent methods insert cheaper curvature information at different points
in the Muon pipeline.  Newton--Muon corrects the input side using activation
moments before orthogonalization \citep{du2026newtonmuon}; Muon2 uses Adam-like
second moments before orthogonalization and couples this with efficient polar
computation \citep{liu2026muon2}; and MALT uses lightweight diagonal
preconditioning \citep{wu2026malt}.  Empirical studies of whitening optimizers
similarly ask which parts of whitening, momentum, and normalization account
for their gains \citep{frans2025whitening}.  \method{} makes this orientation
change coordinate-consistent: the preconditioned matrix enters the polar map
and returns through the matching tangent map.  The exact oracle is general;
the OAS K-FAC-root field is the \method{} instantiation.

Table~\ref{tab:related-work} compares update construction and curvature reuse
across representative structured matrix optimizers.

\begin{table}[t]
 \caption{Relationship to representative structured matrix optimizers.
 ``Factor skip'' denotes steps that skip accumulation of the underlying
 second-moment factors.}
 \label{tab:related-work}
 \centering
 \footnotesize
 \setlength{\tabcolsep}{3pt}
 \begin{tabularx}{\textwidth}{lYYY}
  \toprule
  Method & Statistical geometry & Matrix update & Curvature amortization \\
  \midrule
  Muon & None & \(\polar(M)\) & --- \\
  K-FAC & Activation/backward Fisher factors & Approximate natural gradient & Periodic inverse; later systems also skip factors \\
  Scalable Shampoo & Gradient Gram moments & Kronecker inverse-power preconditioning & Periodic roots; optional factor skip \\
  SOAP & Shampoo moments and eigenbasis & Adam in the Shampoo basis & Periodic eigenbasis refresh \\
  Mousse & Shampoo gradient moments & Root-metric matched polar with graft & Factors frequent, matrix functions periodic \\
  FISMO & Fisher/K-FAC factors & Fisher-metric matched spectral step & Factors updated in its main algorithm \\
  \method{} & OAS observed-label activation/backward moments & Quarter-power matched polar with graft & Hook-gated factors and roots every \(K\); fresh source every step \\
  \bottomrule
 \end{tabularx}
\end{table}

\subsection{Amortized and stale curvature}

Prior second-order methods separate the frequency of gradient updates from the
frequency of curvature work.  The original online K-FAC algorithm
updates moving-average factors from minibatches while reusing a factorization
for multiple iterations \citep{martens2015kfac}.  Distributed K-FAC subsequently
moved inverse computation and factor-statistic computation to asynchronous
workers, explicitly accepting stale curvature in exchange for throughput
\citep{ba2017distributedkfac}.  Large-scale implementations made both
timescales configurable: they reduce factor formation as well as
eigendecomposition frequency \citep{osawa2019distributed,pauloski2020distributed}.
KAISA applies this separation to Transformer linear layers, including BERT,
with distinct factor-update and eigendecomposition-update intervals
\citep{pauloski2021kaisa}.  These systems schedule factor updates and
eigendecompositions independently, including gated activation and backward
capture at Transformer scale.

The Shampoo lineage makes a related separation.  Original Shampoo accumulates
matrix moments more often than it computes inverse roots
\citep{gupta2018shampoo}.  Scalable Shampoo performs roots asynchronously and
also exposes a separate interval for skipping gradient-statistic accumulation
\citep{anil2020scalable}.  Distributed Shampoo typically updates factors every
step while refreshing inverse roots periodically
\citep{shi2023distributedshampoo}.  SOAP treats eigenbasis frequency as an
explicit systems parameter, while later analysis separates stale eigenvalues
from stale eigenvectors and develops adaptive refresh criteria
\citep{vyas2024soap,eschenhagen2025purifying}.  Together, these methods
separate factor, matrix-function, and gradient-update timescales.

\method{} couples quarter-power matched geometry to a two-timescale
sample-and-hold estimator.  A reuse step skips factor decompositions,
activation retention, output-gradient hooks, moment construction, and
shrinkage.  Equation~\eqref{eq:stale} guarantees strictly positive alignment
with the supplied momentum source for every cached SPD pair.  The
\(\beta_B^K\) coefficient tracks elapsed decay, while
Proposition~\ref{prop:lazy-ema} shows that sparse observation raises
stationary factor variance.  The contribution is the measured
compute--statistics tradeoff, not denoising from staleness.

\section{Experiments}

\subsection{Optimization against Muon controls}

We train three-layer causal Transformers of width 128 with four attention
heads, SwiGLU width 384, sequence length 128, and 2,048 tokens per update for
1,000 updates.  We report mean pre-update training cross-entropy in three
fixed 100-update windows and their geometric mean.  The PTB comparison uses
three hash-matched paired seeds.  Here Muon is the standard update
\(D=\polar(M)\), implemented with the same exact polar map and scalar
schedule as \method{}, at learning rate \PTBMuonLR{}.  Ranking the recorded
\method{} sweep by the
three-window geometric mean selects learning rate \PTBGoLR{}.  This
three-window ranking was introduced after the original late-window rule,
which selected \(0.16\), so we treat the hash-matched comparison as
exploratory.  The Tiny Shakespeare comparison uses three fresh paired seeds;
its frozen
campaign control is Muon with the campaign's Q/K
partner metric, headwise Q/K treatment, and registered scalar schedule.
No validation or test split is read.

\begin{table}[t]
 \caption{Training-loss trajectories against Muon controls.  Tiny Shakespeare
 uses three fresh paired seeds; PTB uses three hash-matched paired seeds.
 ``Traj. GM'' is the geometric mean of the three window means.  Lower is
 better.}
 \label{tab:muon-comparison}
 \centering
 \footnotesize
 \setlength{\tabcolsep}{4pt}
 \begin{tabular}{llccccc}
  \toprule
  Corpus & Method & Matrix LR & 201--300 & 501--600 & 901--1000 & Traj. GM \\
  \midrule
  Tiny Shakespeare
    & Muon+Q/K control & \ShakespeareMuonLR
    & \ShakespeareMuonEarly & \ShakespeareMuonMiddle
    & \ShakespeareMuonLate & \ShakespeareMuonTrajectory \\
  Tiny Shakespeare
    & \method{} (lazy \(K=4\)) & \ShakespeareGoLR
    & \textbf{\ShakespeareGoEarly} & \textbf{\ShakespeareGoMiddle}
    & \textbf{\ShakespeareGoLate} & \textbf{\ShakespeareGoTrajectory} \\
  \addlinespace
  Penn Treebank
    & Muon & \PTBMuonLR
    & \PTBMuonTunedEarly & \textbf{\PTBMuonTunedMiddle}
    & \PTBMuonTunedLate & \PTBMuonTrajectory \\
  Penn Treebank
    & \method{} (lazy \(K=4\)) & \PTBGoLR
    & \textbf{\PTBGoTunedEarly} & \PTBGoTunedMiddle
    & \textbf{\PTBGoTunedLate} & \textbf{\PTBGoTrajectory} \\
  \bottomrule
 \end{tabular}
\end{table}

On Tiny Shakespeare, \method{} is lower in all nine paired seed--window
comparisons.  Its reductions in the early, middle, and late windows are
\ShakespeareEarlyGain{}, \ShakespeareMiddleGain{}, and
\ShakespeareLateGain{}, respectively, giving a
\ShakespeareTrajectoryGain{} reduction in the trajectory geometric mean.
Peak learning rate \ShakespeareSelectedGoLR{} was selected by a broader sweep
on tuning seed 17401, where it obtains window losses
\ShakespeareSelectedGoEarly{}, \ShakespeareSelectedGoMiddle{}, and
\ShakespeareSelectedGoLate{}, with trajectory geometric mean
\ShakespeareSelectedGoTrajectory{} versus
\ShakespeareSelectedMuonTrajectory{} for its paired campaign control---a
\ShakespeareSelectedTrajectoryGain{} reduction.  Table~\ref{tab:muon-comparison}
evaluates that frozen selection on fresh seeds 17402--17404.

On PTB, the three-window-selected \method{} lowers the early and late windows by
\PTBEarlyGain{} and \PTBLateGain{}.  The middle window is effectively tied:
Muon is lower by \(0.00083\).  \method{} lowers the three-window trajectory
geometric mean by \PTBTrajectoryGain{} and wins that per-seed summary on all
three matched pairs.

\subsection{Grokking speed on corrected modular addition}

We next measure the onset of grokking rather than terminal loss.  For modulus
\(p\), the dataset contains all \(p^2\) ordered pairs and uses a deterministic
50/50 split, with the training count rounded down, into disjoint train and test
subsets.  The model is a three-layer causal
Transformer of width 64 with four attention heads, SwiGLU width 192, and RoPE.
Its input is \([a,b,\textsc{CLS}]\): the \(p\)-row numerical embedding is tied
to the \(p\)-class output head, while \textsc{CLS} is an independent learned
vector and is not an output class.  Thus the read-side control token is not
also an unused output class.

Training is full batch for 5,000 updates.  Both optimizers use momentum 0.8,
50-step linear warmup followed by a constant learning rate, and identical
AdamW updates for the tied numerical embedding, \textsc{CLS}, and norm
parameters.  \method{} uses \(K=4\) and \(\beta_B=0.97\).  A frozen screen over
learning rates \(\{0.08,0.16\}\) for \method{} and
\(\{0.16,0.32\}\) for Muon selects \ModGoLR{} and \ModMuonLR{}, respectively,
under the grokking criterion below.  The screen uses one deterministic
split and initialization seed for each modulus.

We evaluate the held-out subset every ten updates and define the grokking step
as the first of five consecutive evaluations with test accuracy at least
\(99\%\).  Table~\ref{tab:modular-grokking} also reports, as a memorization
diagnostic, the first update with \(99\%\) training accuracy and the resulting
grokking delay.  Training-set
fitting occurs at nearly the same update for the two optimizers; the separation
appears in the transition to held-out accuracy.

\begin{table}[t]
 \caption{Grokking onset on corrected independent-\textsc{CLS} modular
 addition, using one screening seed per modulus.  ``Grok'' is the first of
 five consecutive evaluations with
 held-out accuracy at least \(99\%\); evaluations occur every ten updates.
 ``Delay'' subtracts the first update with \(99\%\) training accuracy.  Lower
 is better for Train, Grok, and Delay; Speedup is the Muon/\method{} ratio of
 Grok steps.  The endpoint measures onset, while retention is separate.}
 \label{tab:modular-grokking}
 \centering
 \small
 \setlength{\tabcolsep}{4.5pt}
 \begin{tabular}{llrrrrr}
  \toprule
  Task & Method & Matrix LR & Train \(99\%\) & Grok \(99\%\) & Delay & Speedup \\
  \midrule
  mod 103 & Muon & \ModMuonLR
    & \ModOneOhThreeMuonTrain & \ModOneOhThreeMuonGrok
    & \ModOneOhThreeMuonDelay & \(1.0\times\) \\
  mod 103 & \method{} & \ModGoLR
    & \ModOneOhThreeGoTrain & \textbf{\ModOneOhThreeGoGrok}
    & \textbf{\ModOneOhThreeGoDelay}
    & \textbf{\(\ModOneOhThreeSpeedup\times\)} \\
  \addlinespace
  mod 107 & Muon & \ModMuonLR
    & \ModOneOhSevenMuonTrain & \ModOneOhSevenMuonGrok
    & \ModOneOhSevenMuonDelay & \(1.0\times\) \\
  mod 107 & \method{} & \ModGoLR
    & \ModOneOhSevenGoTrain & \textbf{\ModOneOhSevenGoGrok}
    & \textbf{\ModOneOhSevenGoDelay}
    & \textbf{\(\ModOneOhSevenSpeedup\times\)} \\
  \bottomrule
 \end{tabular}
\end{table}

On mod 103, \method{} groks at update \ModOneOhThreeGoGrok{} versus
\ModOneOhThreeMuonGrok{} for Muon, using one eighth as many updates.  On mod
107, the corresponding crossing is \ModOneOhSevenGoGrok{} versus
\ModOneOhSevenMuonGrok{}, or \(1/\ModOneOhSevenSpeedup\) as many updates.
The mod-107 \method{} trajectory later leaves the \(99\%\) region, with
below-threshold evaluations occurring as late as update
\ModOneOhSevenLastRetentionFailure{} before it returns at update
\ModOneOhSevenRetentionReturn{}.  This is a post-grok retention event and does not
redefine the time-to-grok endpoint.

\subsection{Matched map-back on Penn Treebank}

We first isolate matched map-back against the registered \emph{halfspace}
baseline.  The baseline uses an inside-only inverse-half-power score and a
spectral half-space safeguard; the matched arm uses
Equation~\eqref{eq:direction} with \(\alpha=1/4\) and the Frobenius graft.
Both use the PTB training characters only, four paired seeds, peak matrix
learning rate \(0.08\), momentum \(0.8\), and backward EMA \(0.97\).

In the preregistered updates 201--300 window, mean pre-update training loss is
\PTBHalfspaceEarly{} for halfspace and \PTBQuarterEarly{} for matched
map-back, a relative reduction of \PTBMapbackGain{}.  The matched arm is lower
on all four paired seeds.  The allocation was requeued before the final
preregistered window; in every available common 50-step terminal window,
matched map-back remains lower on all four seeds.  We do not impute the missing
901--1000 endpoint.

\subsection{Lazy refresh on Penn Treebank}

We next compare per-step quarter-power geometry with \method{} at \(K=4\)
under a fresh, complete 1,000-update paired run.  Initialization, minibatch
stream, scalar controls, and schedule are identical within each seed.
Table~\ref{tab:ptb-lazy} reports the preregistered mean pre-update training-loss
windows; no validation or test split is read by this protocol.

\begin{table}[t]
 \caption{PTB character-LM training loss over four paired seeds.  ``Wins''
 counts seeds where lazy \(K=4\) is lower than refresh 1.}
 \label{tab:ptb-lazy}
 \centering
 \small
 \begin{tabular}{lrrr}
  \toprule
  Update window & Refresh 1 & Lazy \(K=4\) & Lazy wins \\
  \midrule
  201--300  & \PTBRefreshEarly & \PTBLazyEarly & 2/4 \\
  501--600  & \PTBRefreshMiddle & \textbf{\PTBLazyMiddle} & 2/4 \\
  901--1000 & \PTBRefreshLate & \textbf{\PTBLazyLate} & 4/4 \\
  \bottomrule
 \end{tabular}
\end{table}

The early window is operationally tied.  Lazy refresh is lower by
\PTBLazyMiddleGain{} in the middle window and \PTBLazyLateGain{} in the late
window, winning all four late pairs.  Its geometric-mean step-time ratio is
\PTBLazyStepRatio{}, a reduction of \PTBLazyStepSaving{}.  Both arms record
zero negative momentum-source and current-gradient alignments.  These results
show that the noisier sparse factor estimator can coexist with a favorable
training trajectory; they do not identify blockwise coherence as the cause.

\section{Conclusion}

The first-principles object in \method{} is an SPD-weighted spectral oracle:
\[
 \boxed{
 C=P_BMP_A,\qquad
 D_0=P_BQ(C)P_A,\qquad
 \ip{M}{D_0}=\|C\|_*.
 }
\]
This identity holds for every nonzero source and every supplied SPD pair,
without Fisher or freshness assumptions.  \method{} chooses
\(P_A=A^{-1/4}\) and \(P_B=\bar B^{-1/4}\) from OAS observed-label
activation/backward factors; under predictive consistency, the backward
factor approaches the softmax Fisher/GGN factor.  The Frobenius graft retains
the oracle ray while quarter powers limit worst-case sensitivity to factor
conditioning.

Four-step refresh is a separate systems and statistical choice.  It preserves
elapsed decay and slow-signal lag, increases stationary factor variance, and
removes factor capture and matrix functions from three of every four updates.
On the frozen PTB training protocol, the three-window-selected \method{}
improves the hash-matched trajectory summary over Muon, while the refresh
trade reduces mean step time and improves the late training window.  The
corrected modular-addition screen shows a distinct empirical pattern on its
screening seeds: training-set fitting time is nearly unchanged, while
sustained-\(99\%\) grokking onset is reached at updates
\ModOneOhThreeGoGrok{} and \ModOneOhSevenGoGrok{}, versus
\ModOneOhThreeMuonGrok{} and \ModOneOhSevenMuonGrok{} for Muon.  The resulting
picture is explicit:
matched map-back provides coordinate-consistent spectral optimization,
factor estimation chooses the geometry, and lazy refresh amortizes its cost.

\bibliographystyle{iclr2026_conference}
\bibliography{oas_references}

\appendix

\section{Proofs and additional guarantees}
\label{app:proofs}

\subsection{Proof of Proposition~\ref{prop:conditioning}}

Write \(P_B=B^{-\alpha}\), \(P_A=A^{-\alpha}\), and let \(Q\) be the compact
full-SVD support polar, so \(\|Q\|_F=\sqrt{k}\).  Nuclear-norm multiplication
inequalities give
\[
 \|P_BMP_A\|_*
 \geq\sigma_{\min}(P_B)\sigma_{\min}(P_A)\|M\|_*,
\]
while
\[
 \|P_BQP_A\|_F
 \leq\sqrt{k}\|P_B\|_{\op}\|P_A\|_{\op}.
\]
Substitution into
\[
 \ip{M}{\widehat D_\alpha}
 =\frac{\sqrt{k}\|P_BMP_A\|_*}{\|P_BQP_A\|_F}
\]
gives
\[
 \ip{M}{\widehat D_\alpha}
 \geq
 \frac{\sigma_{\min}(P_B)\sigma_{\min}(P_A)}
 {\sigma_{\max}(P_B)\sigma_{\max}(P_A)}
 \|M\|_*
 =K_{AB}^{-\alpha}\|M\|_*.
\]
The nonzero singular values of a semi-orthogonal \(Q\) are all one.
Singular-value product inequalities therefore give
\[
 \sigma_{\max}(P_BQP_A)
 \leq\sigma_{\max}(P_B)\sigma_{\max}(P_A),
\]
\[
 \sigma_k(P_BQP_A)
 \geq\sigma_{\min}(P_B)\sigma_{\min}(P_A).
\]
Their ratio is at most \(K_{AB}^{\alpha}\).  The positive graft does not
change this ratio.

For completeness, Equation~\eqref{eq:oas-condition} follows by writing
\(\mu=\tr(S)/d\).  Since \(S\succeq0\),
\[
 \lambda_{\min}(\operatorname{OAS}(S))\geq\lambda\mu,\qquad
 \lambda_{\max}(\operatorname{OAS}(S))
 \leq[d-(d-1)\lambda]\mu.
\]

\subsection{A scale-free cached-oracle bound}

\begin{proposition}[Cached oracle under coordinate drift]
\label{prop:stale-approx}
Let \(q_P\) be the current weighted norm from
Equation~\eqref{eq:weighted-norm}, and let
\(\widetilde D\) be any positive rescaling of an exact unit-radius oracle for
cached maps \(\widetilde P_B,\widetilde P_A\).  Define \(\Theta\) by
Equation~\eqref{eq:stale-distortion}.  Then
\begin{equation}
 \boxed{
 \frac{\ip{M}{\widetilde D}}
 {q_P(\widetilde D)\|P_BMP_A\|_*}
 \geq\Theta^{-1}.}
 \label{eq:stale-approx}
\end{equation}
Thus, after rescaling to its actual radius in the current ball, the cached
direction achieves at least a \(\Theta^{-1}\) fraction of the current optimum.
The ratio is unchanged by the Frobenius graft.
\end{proposition}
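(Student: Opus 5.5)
Because the ratio is invariant under positive rescaling of \(\widetilde D\), I will take \(\widetilde D=\widetilde D_0:=\widetilde P_B Q(\widetilde C)\widetilde P_A\) with \(\widetilde C=\widetilde P_BM\widetilde P_A\).  Both numerator and denominator are then homogeneous of degree one in the scale of \(\widetilde D\), which also settles the final claim about the Frobenius graft.  Corollary~\ref{cor:cached} gives the numerator directly: \(\ip{M}{\widetilde D_0}=\|\widetilde C\|_*\).

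It remains to compare three quantities: \(\|\widetilde C\|_*\), the current optimum \(\|C\|_*\) with \(C=P_BMP_A\), and the current radius \(q_P(\widetilde D_0)\).  Introduce the drift matrices
\[
 R_B=P_B^{-1}\widetilde P_B,\qquad R_A=\widetilde P_AP_A^{-1}.
\]
These satisfy
\[
 \widetilde C=\widetilde P_BP_B^{-1}\,C\,P_A^{-1}\widetilde P_A=R_B^\top C R_A^\top
\]
(using symmetry of the maps), and
\[
 q_P(\widetilde D_0)=\|P_B^{-1}\widetilde P_BQ(\widetilde C)\widetilde P_AP_A^{-1}\|_{\op}=\|R_BQ(\widetilde C)R_A\|_{\op}.
\]

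The key steps use the standard singular-value product inequalities \(\|XYZ\|\le\|X\|_{\op}\|Y\|\|Z\|_{\op}\) and \(\|XYZ\|_*\ge\sigma_{\min}(X)\sigma_{\min}(Z)\|Y\|_*\) for square invertible \(X,Z\):
\begin{itemize}
 \item The lower bound \(\|\widetilde C\|_*\ge\sigma_{\min}(R_B)\sigma_{\min}(R_A)\|C\|_*\) holds because \(R_B^\top\) and \(R_A^\top\) are invertible.
 \item The upper bound \(q_P(\widetilde D_0)\le\sigma_{\max}(R_B)\sigma_{\max}(R_A)\) holds because \(\|Q(\widetilde C)\|_{\op}\le1\).
\end{itemize}
Dividing the two gives
\[
 \frac{\ip{M}{\widetilde D_0}}{q_P(\widetilde D_0)\|C\|_*}\ge\frac{1}{\kappa_2(R_B)\kappa_2(R_A)}.
\]

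The main obstacle is matching these condition numbers to \(\Theta\).  The quantity \(\kappa_2(R_B)=\kappa_2(P_B^{-1}\widetilde P_B)\) is exactly the first factor.  For the second, \(R_A=\widetilde P_AP_A^{-1}\) is the transpose of \(P_A^{-1}\widetilde P_A\), and transposition preserves singular values, so \(\kappa_2(R_A)=\kappa_2(P_A^{-1}\widetilde P_A)\).

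One degenerate case needs a separate check.  The quantity \(q_P(\widetilde D_0)\) is positive because \(M\ne0\) forces \(\widetilde C\ne0\), hence \(Q(\widetilde C)\ne0\).  For \(M=0\) the statement is vacuous or excluded, consistent with the convention in the polar-selection paragraph.
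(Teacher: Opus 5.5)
Your proof is correct and follows the paper's argument. Your \(R_B,R_A\) are the paper's \(L,R\), and the radius bound \(q_P(\widetilde D_0)\le\sigma_{\max}(R_B)\sigma_{\max}(R_A)\) and the transposition step for \(\kappa_2(R_A)\) are the same; the one cosmetic difference is that you get \(\|\widetilde C\|_*\ge\sigma_{\min}(R_B)\sigma_{\min}(R_A)\|C\|_*\) directly from \(\widetilde C=R_B^\top CR_A^\top\) and the nuclear-norm product inequality, where the paper uses unit-ball inclusions between \(q_P\) and the cached norm, a dual form of the same inequality.
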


\begin{proof}
Let
\[
 \widetilde q(D)
 =\|\widetilde P_B^{-1}D\widetilde P_A^{-1}\|_{\op},
\quad
 L=P_B^{-1}\widetilde P_B,
\quad
 R=\widetilde P_AP_A^{-1}.
\]
For \(U=\widetilde P_B^{-1}D\widetilde P_A^{-1}\),
\(q_P(D)=\|LUR\|_{\op}\), and hence
\[
 a\,\widetilde q(D)\leq q_P(D)\leq b\,\widetilde q(D),
\]
where
\[
 a=\sigma_{\min}(L)\sigma_{\min}(R),\qquad
 b=\sigma_{\max}(L)\sigma_{\max}(R).
\]
The corresponding unit-ball inclusions imply that the cached support value
\(\widetilde h(M)=\|\widetilde P_BM\widetilde P_A\|_*\) satisfies
\(\widetilde h(M)\geq a\|P_BMP_A\|_*\).  An exact cached unit oracle has
pairing \(\widetilde h(M)\) and current radius at most \(b\).  Its normalized
efficiency is therefore at least \(a/b\).  Since
\(\kappa_2(R)=\kappa_2(P_A^{-1}\widetilde P_A)\),
\(a/b=\Theta^{-1}\).  Positive rescaling cancels from the ratio.
\end{proof}

The bound uses the singular condition numbers of the actual coordinate
changes.  A simpler factor-level power law requires additional commutativity
or functional-calculus assumptions and is not used here.

\subsection{Proof of Proposition~\ref{prop:lazy-ema}}

Let \(F_X(d\omega)\) be the matrix-valued spectral measure of the centered
vector process \(\operatorname{vec}(X_t)\).  At refresh boundaries, the sparse
and per-step transfer functions with respect to the original time index are
\[
 G_K(\omega)=
 \frac{1-\beta_B^K}{1-\beta_B^Ke^{-iK\omega}},
 \qquad
 H_1(\omega)=
 \frac{1-\beta_B}{1-\beta_Be^{-i\omega}}.
\]
Factorizing \(1-(\beta_Be^{-i\omega})^K\) gives
\[
 \frac{|G_K(\omega)|}{|H_1(\omega)|}
 =
 \frac{\sum_{r=0}^{K-1}\beta_B^r}
 {\left|\sum_{r=0}^{K-1}\beta_B^re^{-ir\omega}\right|}
 \geq1.
\]
Therefore
\[
 \operatorname{Cov}(\operatorname{vec}Y_j)
 -\operatorname{Cov}(\operatorname{vec}Z_{jK})
 =
 \int_{-\pi}^{\pi}
 (|G_K|^2-|H_1|^2)F_X(d\omega)
 \succeq0.
\]

For IID observations, the standard AR(1) variance formula immediately yields
Equation~\eqref{eq:iid-variance}.  For an affine signal
\(x_t=a+vt\), the steady refresh-boundary estimate is
\[
 Y_j=x_{jK}-v\frac{K\beta_B^K}{1-\beta_B^K}.
\]
Holding this value through phase \(s\in\{0,\ldots,K-1\}\) adds \(s\) steps of
delay.  Averaging over phases gives Equation~\eqref{eq:lazy-lag}.

\end{document}